\title{Two derivations of Principal Component Analysis \\ on datasets of distributions}
\author{Vlad Niculae \quad (\texttt{v.niculae@uva.nl})\\ Language Technology
Lab, University of Amsterdam, The Netherlands}
\newcommand\var[1]{\mathsf{#1}}
\DeclareMathOperator{\tr}{tr}
\begin{document}
\date{}
\maketitle
\begin{abstract}
In this brief note, we formulate Principal Component Analysis (PCA)
over datasets consisting not of points but of distributions, characterized by
their location and covariance. Just like the usual PCA on points can be
equivalently derived via a variance-maximization principle and via a
minimization of reconstruction error, we derive a closed-form solution for
distributional PCA from both of these perspectives.
\end{abstract}

\section{Introduction}

Most commonly in data science we are concerned with datasets that consist of
points \(\{x_1, \ldots, x_n\} \subset \bbR^d \).  In this note we focus on
datasets of \emph{random variables}
\(\{\var{x}_1, \ldots, \var{x}_n\}\)
where each \(\var{x}_i\) has a probability distribution
summarized by its mean and variance
\[
\bbE[\var{x}_i] = \mu_i,\qquad
\bbV[\var{x}_i] = \Sigma_i.
\]
This scenario fully subsumes the standard point dataset in the limit of
\(\Sigma_i \to 0\), but allows us to further model situations such as:
\begin{itemize}
\item Uncertainty or measurement noise (\ie, inherent variability of the
\(\var{x}_i\)s),
\item Hierarchical data (\eg, psychometric data where each \(\var{x}_i\) is a study
participant for whom several measurements are taken).
\end{itemize}

In this work we extend the usual pointwise PCA to distributional data.
We first recap the definition and derivation of PCA. Then, we show
two different derivations of its distributional counterpart.

\section{Background}

PCA \citep[section 20.1]{pml1Book} is a workhorse of statistical analysis, data science,
and visualization. It
is a dimensionality reduction technique that summarizes a (point) dataset by
linearly transforming it to the most important dimensions of variability.
There are two typical ways to define \emph{most important}, and it turns out
they both lead to the same result.
For this section we assume
a centered point dataset \(\{x_1, \ldots, x_n\}\), i.e., \(\sum_i x_i = 0\).
\footnote{PCA is typically defined after centering, but in some scenarios (\eg,
high-dimensional sparse data) centering is sometimes skipped. While centering
is important for some statistical interpretation of the method, it makes no
difference for our derivation.}

\paragraph{Directions of maximal variance}

One road toward PCA starts with the question: what is the direction
\(u\) that maximizes the variance of the projected dataset?
In other words, we seek:
\begin{equation}\label{eq:pca_variance}
\arg\max \left\{ \sum_i (u^\top x_i)^2 : u \in \bbR^d, \|u\|=1 \right\}.
\end{equation}
This is because \(z_i = u^\top x_i\) is the projection of \(x_i\) along the direction
of \(u\), and since we assume the \(x_i\)s are centered then so are the
\(z_i\)s, and so the objective of \cref{eq:pca_variance} is the empirical
variance of the \(z_i\)s.

We may rewrite the objective of \cref{eq:pca_variance} as \(u^\top S u \) where
\(S = \sum_i x_i x_i^\top \),
and therefore we recognize that the solution of \cref{eq:pca_variance} is the
eigenvector of \(S\) corresponding to the largest eigenvalue.
This view readily extends to seeking the top-k principal components \(u_1,
\ldots, u_k\) by requiring additional orthogonal constraints, \ie, \(U^\top U =
I\), and the solution is likewise given by the top-k eigenvectors of \(S\).

\paragraph{Minimizing the reconstruction error}
If we view \(z_i = u^\top x_i\) as a 1-d encoded representation of \(x_i\),
we can map \(z_i\) back into \(\bbR^n\) as the vector \(z_i u\) in the span of
\(u\). This process will lose information.
We may then ask the question: which direction \(u\) minimizes the
reconstruction error of this encoding-decoding process? Or,
\begin{equation}
\arg\min \left\{ \sum_i \| x_i - uu^\top x_i \|^2 : u \in \bbR^d, \|u\|=1
\right\}.
\end{equation}
Denote \(Q=uu^\top\) and remark \(Q\) is a projection matrix, thus idempotent
and so is \(I-Q\). Then
\begin{equation}\label{eq:minerr}
\begin{aligned}
\sum_i \|x_i - Qx_i\|^2
&= \sum_i \|(I-Q)x_i \|^2\\
&= \sum_i x_i^\top (I-Q) x_i \\
&= -\sum_i x_i^\top (uu^\top) x_i + \text{const}\\
&= -u^\top S u + \text{const}, \\
\end{aligned}
\end{equation}
where the last step uses the same rearranging of the dot product as in the
paragraph above.

So, minimizing the reconstruction error, or maximizing projected variance, are
equivalent views that lead to the same principal component solution.

\section{Deriving distributional PCA}

We propose the following formulation for PCA over a dataset of random variables
\(\{\var{x}_1, \ldots, \var{x}_n\}\):

\begin{definition}[Distributional PCA]\label{def:dpca}
Given a dataset of random variables, denoted \(\{\var{x}_1, \ldots,
\var{x}_n\}\),
with means \(\mu_i\) and covariance matrices \(\Sigma_i\), the principal
components of this dataset are the leading eigenvectors of the matrix:

\[ \sum_i \mu_i \mu_i^\top + \Sigma_i. \]
\end{definition}

We shall give two justifications of this definition.

\begin{proposition}
Distributional PCA, as in \cref{def:dpca}, maximizes the expected projected
variance:
\begin{equation}
\arg\max \left\{
\bbE_{\var{x}_1, \ldots, \var{x}_n} \left[ \sum_i (u^\top \var{x}_i)^2\right]:
u \in \bbR^d, \|u\|=1 \right\}.
\end{equation}
\end{proposition}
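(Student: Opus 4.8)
The plan is to show that the expected objective equals the quadratic form $u^\top M u$, where $M = \sum_i (\mu_i \mu_i^\top + \Sigma_i)$ is exactly the matrix of \cref{def:dpca}, and then to conclude by the very same Rayleigh-quotient argument used for ordinary PCA around \cref{eq:pca_variance}. First I would push the expectation through the finite sum by linearity, writing $\bbE\!\left[\sum_i (u^\top \var{x}_i)^2\right] = \sum_i \bbE[(u^\top \var{x}_i)^2]$. It is worth emphasizing that each summand involves a single $\var{x}_i$, so no assumption on the joint law of the dataset (in particular, no independence among the $\var{x}_i$) is required; linearity alone suffices.

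Next, for each $i$ I would regard $z_i := u^\top \var{x}_i$ as a scalar random variable and apply the second-moment decomposition $\bbE[z_i^2] = \bbV[z_i] + (\bbE[z_i])^2$. Since $u$ is a deterministic vector, $\bbE[z_i] = u^\top \mu_i$ and $\bbV[z_i] = u^\top \Sigma_i u$, so that $\bbE[z_i^2] = u^\top \mu_i \mu_i^\top u + u^\top \Sigma_i u$. Summing over $i$ and factoring $u$ out on both sides yields $\bbE\!\left[\sum_i (u^\top \var{x}_i)^2\right] = u^\top M u$. Maximizing this quadratic form subject to $\|u\| = 1$ is then the standard Rayleigh problem, whose solution is the leading eigenvector of $M$, matching \cref{def:dpca} exactly; the top-$k$ version follows by imposing the orthogonality constraints $U^\top U = I$ precisely as in the pointwise case.

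I do not anticipate a genuine obstacle, since this is a direct distributional analogue of the variance-maximization derivation recalled in the Background. The one step requiring care is the moment decomposition, and specifically pulling the deterministic direction $u$ through the variance operator to turn $\bbV[u^\top \var{x}_i]$ into $u^\top \Sigma_i u$; this is where the covariance contribution $\Sigma_i$ enters and recovers the $\mu_i\mu_i^\top + \Sigma_i$ structure rather than just the outer products $\mu_i\mu_i^\top$ one would obtain from the means alone. Everything else reduces to linearity of expectation and the variational characterization of the top eigenvector.
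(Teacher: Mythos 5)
Your proof is correct and follows essentially the same route as the paper: both reduce the objective to the quadratic form $u^\top \left(\sum_i \mu_i\mu_i^\top + \Sigma_i\right) u$ via linearity of expectation and then invoke the standard eigenvector characterization. The only cosmetic difference is that you project first and use the scalar identity $\bbE[z_i^2] = \bbV[z_i] + (\bbE[z_i])^2$, whereas the paper takes the matrix second moment $\bbE[\var{x}_i\var{x}_i^\top] = \mu_i\mu_i^\top + \Sigma_i$ and sandwiches it with $u$ afterwards; these are the same identity.
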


\begin{proof}
Rearranging and using linearity, we may rewrite the objective as
\[
\begin{aligned}
\bbE\left[\sum_i (u^\top \var{x}_i)^2 \right]
&= \bbE\left[\sum_i u^\top(\var{x}_i \var{x}_i^\top) u\right] \\
&= \sum_i u^\top\left(\bbE[ \var{x}_i \var{x}_i^\top]\right) u\\
&=
u^\top\left(
\sum_i \mu_i \mu_i^\top + \Sigma_i
\right) u.
\end{aligned}
\]
\end{proof}

\begin{proposition}
Distributional PCA, as in \cref{def:dpca}, minimizes the total squared
2-Wasserstein reconstruction error under the linear projection:
\begin{equation}
\arg\min \left\{
    \sum_i W_2^2(\var{x}_i, uu^\top \var{x}_i)
    : u \in \bbR^d, \|u\|=1
\right\}.
\end{equation}
\end{proposition}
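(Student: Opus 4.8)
The plan is to reduce each Wasserstein term to the pointwise reconstruction error already analyzed in \cref{eq:minerr}, and then reuse that computation verbatim. Writing \(Q = uu^\top\) for the orthogonal (hence idempotent and symmetric) projection onto \(\mathrm{span}(u)\), the decoded variable \(Q\var{x}_i\) is supported entirely on the line \(\mathrm{span}(u)\). The central first step is therefore to establish the exact identity
\[
W_2^2(\var{x}_i, Q\var{x}_i) = \bbE\!\left[\|\var{x}_i - Q\var{x}_i\|^2\right],
\]
i.e.\ to show that the ``shared-randomness'' coupling, in which a single draw feeds both arguments, is in fact \emph{optimal} and not merely admissible.

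To prove this identity I would split an arbitrary coupling \(\gamma\) of \(\var{x}_i\) and \(Q\var{x}_i\) into its components parallel and orthogonal to \(u\). For a pair \((x,y)\) we have, by Pythagoras, \(\|x-y\|^2 = (u^\top x - u^\top y)^2 + \|(I-Q)(x-y)\|^2\). Since the second marginal lives on \(\mathrm{span}(u)\), every admissible \(y\) satisfies \((I-Q)y = 0\) almost surely, so the orthogonal part collapses to \(\|(I-Q)x\|^2\) and is \emph{the same for every coupling}. The parallel part \(\bbE_\gamma[(u^\top x - u^\top y)^2]\) is a one-dimensional transport cost between two copies of the identical marginal \(u^\top\var{x}_i\); its infimum over couplings is \(0\), attained exactly by \(y = Qx\) (for which \(u^\top y = u^\top x\)). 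Hence the shared-randomness coupling simultaneously zeroes the only coupling-dependent term, so it is globally optimal and the displayed identity holds for arbitrary distributions, not merely Gaussians.

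From here the argument mirrors the pointwise case. Using idempotency and symmetry of \(I-Q\) exactly as in \cref{eq:minerr},
\[
\bbE\!\left[\|(I-Q)\var{x}_i\|^2\right]
= \bbE\!\left[\var{x}_i^\top(I-Q)\var{x}_i\right]
= \tr\!\left((I-Q)\,\bbE[\var{x}_i\var{x}_i^\top]\right)
= \tr\!\left((I-Q)(\mu_i\mu_i^\top + \Sigma_i)\right),
\]
where the last equality is the second-moment identity already established in the previous proposition. Summing over \(i\) and setting \(M = \sum_i \mu_i\mu_i^\top + \Sigma_i\) gives \(\sum_i W_2^2(\var{x}_i, Q\var{x}_i) = \tr(M) - u^\top M u\), so the minimization is equivalent to maximizing \(u^\top M u\) subject to \(\|u\|=1\), whose solution is the leading eigenvector of \(M\), precisely as in \cref{def:dpca}. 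The same orthogonal-constraint argument as in ordinary PCA then extends this to the top-\(k\) components.

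I expect the optimality of the shared-randomness coupling to be the only genuinely nontrivial step. The inequality \(W_2^2(\var{x}_i, Q\var{x}_i) \le \bbE[\|\var{x}_i - Q\var{x}_i\|^2]\) is immediate because that coupling is admissible; it is the reverse inequality that requires the parallel/orthogonal decomposition above to rule out that some cleverer transport plan could do strictly better. Everything following that identity is exactly the bookkeeping already performed for the pointwise derivation.
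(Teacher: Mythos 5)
Your proof is correct, but it reaches the key identity by a genuinely different route than the paper. The paper isolates the same fact---that the deterministic ``shared-randomness'' coupling \((x, Qx)\) is optimal---as a lemma, proved by first translating to zero-mean measures and then invoking Brenier's theorem: \(Q\) is the gradient of the convex map \(x \mapsto \frac{1}{2}x^\top Q x\), hence the optimal transport map onto its own pushforward, giving \(W_2^2(\var{x}, Q\var{x}) = \|\mu - Q\mu\|^2 + \tr((I-Q)\Sigma)\), which is exactly your \(\tr\bigl((I-Q)(\mu\mu^\top + \Sigma)\bigr)\). You instead prove optimality from first principles: since the second marginal is supported on \(\mathrm{span}(u)\), the orthogonal component of the cost \(\|(I-Q)x\|^2\) is the same for every admissible coupling, and the parallel component is a nonnegative transport cost between two copies of \(u^\top\var{x}_i\) that the coupling \(y = Qx\) drives to zero. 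Your argument is more elementary and self-contained---it uses only the definition of \(W_2\) as an infimum over couplings, needs no regularity hypotheses on the law of \(\var{x}_i\), and makes visible \emph{why} no cleverer plan can help (the only coupling-dependent term is already zero); it also dispenses with the separate centering step by working directly with the second-moment matrix. What the paper's route buys is generality and context: the Brenier/Knott--Smith argument works for any gradient of a convex function, not just orthogonal projections, and connects the result to the optimal-transport literature it builds on. One cosmetic caveat: your phrase ``attained exactly by \(y = Qx\)'' overclaims uniqueness of the optimal coupling (irrelevant to the proof, since you only need that this coupling attains the lower bound). The remaining bookkeeping, \(\sum_i \tr((I-Q)M_i) = \tr(M) - u^\top M u\), matches the paper's final display.
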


To prove this result, we need the following lemma:
\begin{lemma}[Masarotto]
Let \(\var{x}\) be a random variable with mean \(\mu\) and variance
\(\Sigma\), and \(Q\) be a projection matrix.
Then
\[
W_2^2(\var{x}, Q\var{x}) = \|\mu - Q\mu\|^2 + \tr((I-Q)\Sigma).
\]
\end{lemma}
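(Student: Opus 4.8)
The plan is to work directly from the definition of the squared $2$-Wasserstein distance as an infimum of transport cost over couplings, $W_2^2(\var{x}, Q\var{x}) = \inf_\gamma \bbE_\gamma \|X - Y\|^2$ over joint laws $\gamma$ of $(X,Y)$ with the prescribed marginals, and to guess the optimal coupling. The natural candidate is the \emph{deterministic} coupling that sends each realization $x$ of $\var{x}$ to $Qx$; this is admissible because its second marginal is exactly the law of $Q\var{x}$, and it immediately yields the upper bound
\[
W_2^2(\var{x}, Q\var{x}) \le \bbE\big[\|\var{x} - Q\var{x}\|^2\big] = \bbE\big[\|(I-Q)\var{x}\|^2\big].
\]
The whole argument then reduces to two tasks: evaluating this expectation, and showing the bound is tight.

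The expectation I would evaluate by reusing the trace manipulation from \cref{eq:minerr}. Since $Q$ is an orthogonal projection it is symmetric and idempotent, hence so is $I-Q$, and $(I-Q)^\top(I-Q) = I-Q$. Thus $\|(I-Q)\var{x}\|^2 = \var{x}^\top(I-Q)\var{x}$, and taking expectations with $\bbE[\var{x}\var{x}^\top] = \Sigma + \mu\mu^\top$ gives
\[
\bbE\big[\var{x}^\top(I-Q)\var{x}\big] = \tr\big((I-Q)(\Sigma + \mu\mu^\top)\big) = \tr\big((I-Q)\Sigma\big) + \|(I-Q)\mu\|^2,
\]
and $\|(I-Q)\mu\| = \|\mu - Q\mu\|$, which is precisely the claimed expression.

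The hard part is tightness: a priori the deterministic coupling gives only an inequality, and $W_2$ between general distributions depends on more than $\mu$ and $\Sigma$. The crux is to recognize that the transport map $x \mapsto Qx$ is the gradient of the convex potential $\tfrac{1}{2}\,x^\top Q x$, convex precisely because a projection is positive semidefinite. By the Knott--Smith/Brenier optimality criterion, any coupling whose support lies in the subdifferential of a convex function is $W_2$-optimal; ours is supported on the graph of $\nabla(\tfrac{1}{2}\,x^\top Q x)$, so it is optimal and the bound holds with equality for any distribution of finite second moment. An alternative route, valid when the $\var{x}_i$ are Gaussian, is to invoke the Bures/Gelbrich closed form for $W_2$ between Gaussians with means $\mu, Q\mu$ and covariances $\Sigma, Q\Sigma Q$, and then simplify its cross term via the identity $(\Sigma^{1/2} Q \Sigma Q \Sigma^{1/2})^{1/2} = \Sigma^{1/2} Q \Sigma^{1/2}$ — which holds because the right-hand side is positive semidefinite and squares to the argument — reducing the cross trace to $\tr(Q\Sigma)$ and recovering the same formula.
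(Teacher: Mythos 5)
Your argument is correct and rests on the same key idea as the paper's proof: the map \(x \mapsto Qx\) is the gradient of the convex potential \(\frac{1}{2}x^\top Q x\), so by the Brenier/Knott--Smith optimality criterion the deterministic coupling is optimal and the distance equals \(\bbE\left[\|(I-Q)\var{x}\|^2\right]\). The only difference is organizational: the paper first splits off the mean term using the translation property of \(W_2\) and applies Brenier to the centered variable, whereas you apply the optimality criterion directly to the uncentered law and recover the mean term from \(\bbE[\var{x}\var{x}^\top] = \Sigma + \mu\mu^\top\) — an equivalent (and arguably slightly leaner) bookkeeping of the same computation.
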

\begin{proof} (of the lemma).
This is a slight extension of the unnumbered result given by
\citet{masarotto2019procrustes} in their section 5.
First, we use the translation property of \(W_2\)
\citep[Remark 2.19]{cot}
to reduce the problem to a
distance between zero-mean measures:
\[
W_2^2(\var{x}, Q\var{x}) = \|\mu - Q\mu\|^2 +
W_2^2(\var{\bar{x}}, Q\var{\bar{x}})
\]
where \(\var{\bar{x}}=\var{x}-\mu\).
If \(\alpha\) is the probability measure associated with \(\var{\bar{x}}\),
then \(\beta = Q_\sharp \alpha\) is the probability measure of
the pushforward \(Q\var{\bar{x}}\).
Since \(Q\) is a projection matrix, it is symmetric positive semidefinite and
therefore it is the gradient of a convex mapping \(x \to \frac{1}{2} x^\top Q x\)
By Brenier's theorem \citep[Remark 2.24]{cot}, Q is the optimal transport plan
between \(\alpha\) and \(\beta\). This implies
\[
\begin{aligned}
W_2^2(\var{\bar{x}}, Q\var{\bar{x}})
&= \int_x
d\alpha~
\|x - Qx\|^2  \\
&= \int_x d\alpha~x^\top (I-Q) x \\
&= \int_x
d\alpha~\tr \left( (I-Q) xx^\top \right)\\
&= \tr \left( (I-Q) \Sigma \right).
\end{aligned}
\]
\end{proof}

\begin{proof} (of the proposition).
Let \(Q=uu^\top\) denote the projection operator onto the span of \(u\).
Applying the lemma,
\begin{equation}
\begin{aligned}
\sum_i W_2^2(\var{x}_i, Q\var{x}_i) &=
\left( \sum_i \mu_i^\top (I-Q) \mu_i
+ \tr\left((I-Q)\Sigma_i\right) \right)\\
&= -\sum_i \tr\left(Q(\mu_i\mu_i^\top + \Sigma_i)\right) + \text{const}\\
&= -u^\top \left(\sum_i \mu_i \mu_i^\top + \Sigma_i\right) u + \text{const}.
\end{aligned}
\end{equation}
\end{proof}

We have thus shown that distributional PCA can also be viewed equivalently from
a variance-maximization and error-minimization angle, just like usual pointwise
PCA. In addition, in the limit of all \(\Sigma_i \to 0\), we recover usual PCA.
Finally we remark that while we use a single principal component in the above
derivations, everything holds for \(k\) orthogonal principal components as well.

\section{Discussion}

\paragraph{Visualization.}
To demonstrate how distributional PCA works, we construct a dataset with four
Gaussian random variables. Their locations are
\(
\mu_1 = (-0.5, -2),
\mu_2 = (0.5, -1),
\mu_3 = (-0.5, 0),
\mu_4 = (-0.5, 1),
\)
and their covariances are all equal to \(\Sigma=\diag(1, 0.5)\).
\Cref{fig:samples} shows the principal component direction obtained by
performing the usual PCA on the four means, performing distributional PCA, and
performing PCA on a dataset obtained by drawing 1000 samples from each of the
four distributions. Our proposed formula indeed characterizes the limit case
of sampling from the distributional dataset.
\begin{figure}\centering
\includegraphics[width=.6\textwidth]{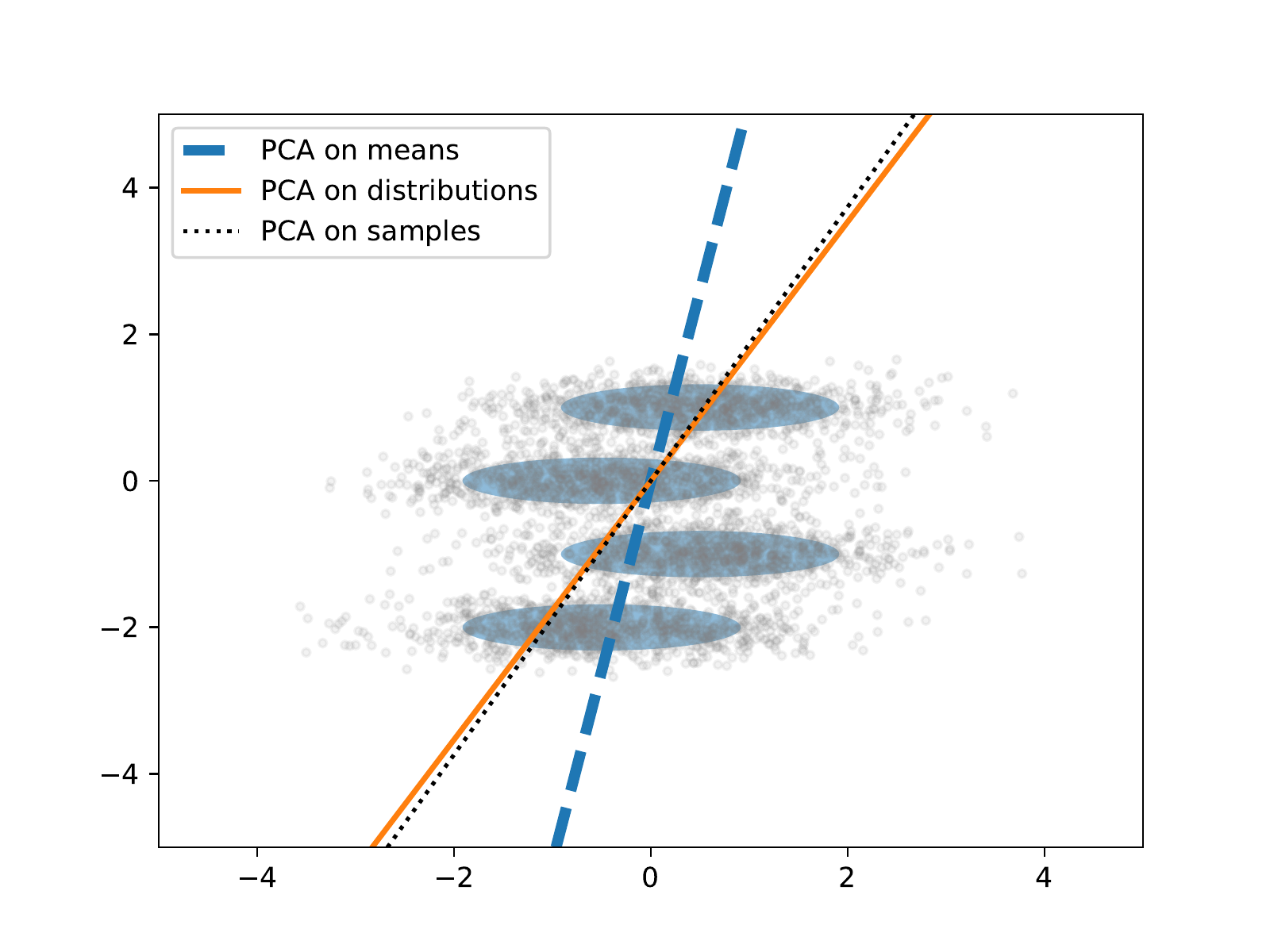}
\caption{\label{fig:samples} Comparing PCA on distributions, means, and samples,
on a dataset of four Gaussian distributions.}
\end{figure}

\paragraph{Related work.}
\citet{masarotto2022transportation} recently proposed a transportation-based PCA between covariance
matrices. Their formulation applies PCA in the tangent space of a manifold of
covariance operators and therefore leads to a different algorithm, somewhat more
computationally intensive as it requires calculating a Fr\'echet mean.
While our formulation only depends on covariance matrices through their sum,
their formulation seems more suited for capturing differences between individual
covariances. On the other hand, transportation PCA does not take into account
means, just covariances. We shall explore
the relationship and tradeoffs between the two formulations in the future.

\paragraph{Acknowledgements.}
This work is partly supported by NWO VI.Veni.212.228
and the European Union's Horizon Europe research and innovation programme
via UTTER 101070631. \hfill \euflag

\bibliographystyle{plainnat}
\bibliography{note.bib}

\end{document}